\documentclass{article}

\usepackage[preprint]{neurips_2026}

\usepackage[utf8]{inputenc}
\usepackage[T1]{fontenc}
\usepackage{hyperref}
\usepackage{url}
\usepackage{booktabs}
\usepackage{amsfonts}
\usepackage{amsmath}
\usepackage{amsthm}
\usepackage{nicefrac}
\usepackage{microtype}
\usepackage{xcolor}
\usepackage{graphicx}
\usepackage{float}
\usepackage{placeins}
\usepackage{algorithm}
\usepackage{algpseudocode}

\title{Spectral Flow Certificates for Depth-Aware\\
Long-Range Propagation in Graph Neural Networks}

\author{
  Ranjan Veerabhadraswamy \\
  School of Computer Science and Engineering \\
  Vellore Institute of Technology (VIT-AP) \\
  Andhra Pradesh, India \\
  \texttt{ranjanvswamyjnv2005@gmail.com} \\
  \And
  Dr. Ajith Jubilson Emerson \\
  School Computer Science and Engineering \\
  Vellore Institute of Technology (VIT-AP) \\
  Andhra Pradesh, India \\
  \texttt{ajith.jubilson@vitap.ac.in} \\
}

\newtheorem{proposition}{Proposition}
\newtheorem{lemma}{Lemma}

\newtheorem{definition}{Definition}
\newcommand{\SFC}{\operatorname{SFC}}
\newcommand{\gap}{\gamma}
\newcommand{\Rtwo}{R^2}
\newcommand{\Pnorm}{P_{\mathrm{norm}}}

\begin{document}

\maketitle

\begin{abstract}
Graph Neural Networks propagate information through local message passing, but the graph topologies themselves can silently prevent any amount of training from solving long-range tasks.
When we deploy GNNs on new graphs, there is currently no inexpensive way to know, before training begins, whether the graphs' structures will allow information to travel far enough between distant nodes.
We address this gap by proposing Spectral Flow Certificates (SFCs),
single scalars computed from the graphs' normalised Laplacians in seconds, requiring no model training and no labelled data.
An SFC fuses a graph's algebraic connectivity with the chosen
message-passing depth into one number that measures how much of the
critical spectral bottleneck can be traversed within the available depth budget.
Unlike raw spectral gaps, which are static and depth-agnostic, SFCs adapt as the number of layers increases and therefore carry strictly more diagnostic information when depths vary.
Compared with classical structural statistics such as average effective
resistance and graph diameter, SFCs explain more than twice as much
variance in trained GNN long-range accuracy.
Across twenty-five synthetic graph families spanning paths, cycles, grids, regular graphs, and random graphs, SFCs predict trained accuracy before any gradients are computed, achieving explanatory power above ninety percent at all tested depths.
The same predictive relationships hold on one hundred fifty real molecular graph topologies drawn from three independent benchmark datasets, confirming that the findings are not artefacts of their synthetic construction.
Taken together, these results show that a single eigenvalue computation is sufficient to flag topology-limited graphs before committing to expensive training pipelines, providing a principled first filter for GNN deployments.
\end{abstract}

\section{Introduction}

Message-passing GNNs compute node representations by repeatedly
aggregating features from graph neighbours.
After $k$ layers, each node has access only to the information that
can be reached within $k$ hops.
This local aggregation is parameter-efficient and scalable, but it
creates a fundamental topological limitation: if the graph structure
makes distant nodes hard to reach within the available depth, no
amount of training can compensate.

The problem is most acute on \emph{long-range tasks}, where the
correct prediction at a query node depends on signals at nodes that
are many hops away.
A standard three-layer GCN trained on a path graph with one hundred
nodes will fail such a task not because of a poor optimisation
trajectory but because the graph topology physically prevents the
signal from arriving at the query node within three steps.
Despite this, standard practice is to discover such topological
failures only after full training, which wastes substantial
computation and time.

This phenomenon, broadly known as \emph{over-squashing}
\citep{alon2021bottleneck}, has attracted significant attention,
with proposed remedies ranging from graph rewiring
\citep{topping2022understanding, karhadkar2023fosr,
liang2025mitigating} to architectural modifications
\citep{rampavsek2022recipe, rusch2022graph}.
However, all existing approaches diagnose or fix over-squashing
either by architectural intervention or by post-hoc analysis.
No prior work provides a training-free, per-instance, per-depth
scalar certificate that can be computed before training to predict
whether a given graph-depth combination will suffer from long-range
failure.

As an example of the costs associated with this problem within the scientific literature, take the case of a routine application from the field of computational chemistry, for instance, the modeling of the interaction energy of a complex protein-ligand structure.A researcher would spend valuable time designing and training a deep GNN,
only to discover that the architecture is not able to account for long-range interactions. The failure is not due to a flawed architecture or insufficient data, but rather the intrinsic topological bottleneck of the molecular graph itself preventing message propagation. By introducing the Spectral Flow Certificate (SFC), we provide a mechanism to evaluate this exact scenario \emph{a priori}. In seconds, a single eigenvalue computation flags the topology as fundamentally limiting, allowing the practitioner to intervene---such as by applying graph rewiring or pivoting to a global attention mechanism---before a single gradient is computed.

We ask whether this failure can be predicted before training using
only graph structure and the chosen depth.
The key insight is that the normalised graph Laplacian
$L_{\mathrm{norm}} = I - D^{-1/2}AD^{-1/2}$
encodes all connectivity information through its eigenspectrum.
The second smallest eigenvalue $\gap(G) = \lambda_2(L_{\mathrm{norm}})$,
the \emph{spectral gap}, governs the slowest non-trivial mixing
rate in the graph: a small spectral gap indicates a bottleneck
through which information moves slowly, while a large spectral gap
indicates that information spreads rapidly across the graph.

The spectral gap alone, however, does not account for the
message-passing depth $k$.
A graph with a moderately small spectral gap may still allow
sufficient mixing if the GNN uses enough layers.
We therefore define the \emph{Spectral Flow Certificate}
\begin{equation}
    \SFC(G,k) = 1 - (1-\gap(G))^k,
    \label{eq:sfc}
\end{equation}
which combines graph topology and message-passing depth into a
single scalar.
SFC measures what fraction of the critical spectral bottleneck can
be traversed within $k$ propagation steps.
When $\SFC(G,k)$ is near zero, the graph provides a poor conduit
for long-range information at the chosen depth.
When it approaches one, the spectral bottleneck has been
substantially overcome.

\paragraph{Motivating example.}
Consider a path graph $P_{100}$ and a 3-regular random graph on
100 nodes, both evaluated with a three-layer GNN.
The path has $\gap(P_{100}) \approx 0.001$, giving
$\SFC(P_{100}, 3) \approx 0.003$: almost no spectral flow in
three steps.
The regular graph has $\gap \approx 0.45$, giving
$\SFC \approx 0.83$: the bottleneck is largely overcome.
Before any training, SFC predicts GNN failure on the path and
success on the regular graph.
Empirically, trained accuracy on the path is near the chance
level of $0.5$, while on the regular graph it exceeds $0.85$,
matching the certificate's prediction.

\paragraph{Contributions.}
We make three contributions.
First, we propose SFC as a depth-aware, training-free certificate
for long-range message passing and provide an information-theoretic
justification via a mutual information bound.
Second, we prove that SFC is monotone in both depth and spectral
gap, and we characterise when SFC provides strictly more
information than the raw spectral gap.
Third, we validate SFC across twenty-five synthetic graph families
at three depths, and across one hundred fifty real molecular graph
topologies from three independent benchmarks, demonstrating that
SFC substantially outperforms non-spectral baselines and provides
a depth-aware improvement over raw spectral gap when layers vary.

\section{Related Work}

\paragraph{Over-squashing and graph bottlenecks.}
Alon and Yahav \citep{alon2021bottleneck} identified over-squashing
as a fundamental failure mode in message-passing GNNs, showing that
distant information must pass through narrow topological bottlenecks
before reaching a target node.
Topping et al.\ \citep{topping2022understanding} characterised
this phenomenon using discrete Ricci curvature and showed that
targeted edge additions can alleviate the bottleneck.
Di Giovanni et al.\ \citep{di2023how} connected over-squashing
directly to the Jacobian of node representations, establishing
formal links between topology and gradient flow.
Black et al.\ \citep{black2023understanding} studied the same
bottleneck through the lens of effective resistance, providing
an alternative topological characterisation.
Banerjee et al.\ \citep{banerjee2022oversquashing} analysed
over-squashing through information theory and graph reversibility,
while Nguyen et al.\ \citep{nguyen2023revisiting} revisited the
problem with refined bounds.
Giraldo et al.\ \citep{giraldo2023understanding} connected
over-squashing to graph curvature in the AISTATS setting.
Our work is complementary to all of the above: rather than proposing
a remedy, we derive a pre-training scalar certificate that predicts
when over-squashing will be severe enough to impair accuracy.

\paragraph{Graph rewiring approaches.}
Several methods address over-squashing by modifying the graph
structure before or during training.
Karhadkar et al.\ \citep{karhadkar2023fosr} proposed first-order
spectral rewiring to add edges that improve the spectral gap.
Liang et al.\ \citep{liang2025mitigating} used spectrum-preserving
sparsification to reduce over-squashing while maintaining spectral
properties.
Arnaiz-Rodr\'iguez et al.\ \citep{arnaiz2022diffwire} introduced
DiffWire, which rewires graphs using the Lov\'asz bound.
Anonymous \citep{anonymous2024spectral} studied graph pruning
strategies that simultaneously address over-squashing and
over-smoothing.
SFC differs from these approaches by providing a diagnostic before
any modification, enabling practitioners to decide whether rewiring
is even necessary.

\paragraph{Long-range benchmarks and architectures.}
Dwivedi et al.\ \citep{dwivedi2022long} introduced the Long Range
Graph Benchmark, which systematically evaluates GNN architectures
on tasks that require long-range information propagation.
Ramp\'{a}\v{s}ek et al.\ \citep{rampavsek2022recipe} developed
a general graph transformer framework that addresses long-range
limitations through global attention.
Rusch et al.\ \citep{rusch2022graph} proposed graph-coupled
oscillator networks to mitigate over-squashing via continuous
dynamics.
Gravina et al.\ \citep{gravina2026advection} introduced
advection-diffusion operators on graphs for spectral GNNs.
SFC is architecture-agnostic and quantifies when any message-passing
architecture operating on a given graph will be structurally limited.

\paragraph{Spectral graph theory and GNN analysis.}
The normalised Laplacian and its eigenvalues underlie classical
analyses of graph connectivity, random walk mixing, and expander
graphs \citep{chung1997spectral}.
Defferrard et al.\ \citep{defferrard2016convolutional} introduced
spectral graph convolutions with Chebyshev polynomial approximations.
Kipf and Welling \citep{kipf2017semi} simplified this into the GCN
architecture.
Gasteiger et al.\ \citep{gasteiger2019diffusion} used diffusion
as a feature propagation mechanism to improve learning on graphs.
Li et al.\ \citep{li2018deeper} used spectral analysis to study
over-smoothing, a complementary failure mode to over-squashing.
Balcilar et al.\ \citep{balcilar2021analyzing} provided a unified
spectral perspective on GNN expressive power.
Chien et al.\ \citep{chien2021adaptive} introduced adaptive
generalised PageRank GNNs for learning from both homophilous and
heterophilous graphs.
Loukas \citep{loukas2020graph} proved fundamental depth-versus-width
trade-offs for what message-passing GNNs can learn.
Liu et al.\ \citep{liu2024improving} studied eigenvalue correction
for spectral GNNs.
Chen et al.\ \citep{chen2025large} examined Laplacian sparsification
for large-scale spectral GNNs.
SFC builds on this spectral foundation, converting the spectral gap
into a depth-specific propagation prediction.

\paragraph{Expressivity and structural limitations.}
Xu et al.\ \citep{xu2019powerful} and Morris et al.\
\citep{morris2019weisfeiler} established that message-passing GNNs
are bounded in expressivity by the Weisfeiler-Leman graph isomorphism
test.
Bodnar et al.\ \citep{bodnar2021weisfeiler} extended this analysis
to cellular complexes.
Zhang et al.\ \citep{zhang2021magnet} studied directional
message-passing on directed graphs.
Wang et al.\ \citep{wang2024discovering} characterised representation
bottlenecks in GNNs more broadly.
These analyses concern graph isomorphism and representational
capacity; our analysis concerns information propagation distance,
which is a distinct and complementary aspect of GNN capability.

\section{Spectral Flow Certificate}

\subsection{Setup and notation}

Let $G = (V, E)$ be a connected, undirected graph with $|V| = n$
nodes.
Write $A$ for the adjacency matrix,
$D = \mathrm{diag}(A\mathbf{1})$ for the degree matrix,
$\Pnorm = D^{-1/2}AD^{-1/2}$ for the normalised adjacency, and
$L_{\mathrm{norm}} = I - \Pnorm$ for the normalised Laplacian.
The eigenvalues of $L_{\mathrm{norm}}$ satisfy
\[
    0 = \lambda_1 \leq \lambda_2 \leq \cdots \leq \lambda_n \leq 2,
\]
with $\lambda_1 = 0$ always (constant eigenvector) and
$\lambda_2 > 0$ if and only if $G$ is connected.
The spectral gap is $\gap(G) = \lambda_2$.

\subsection{Eigenmode retention under message passing}

A $k$-layer linear GNN computes
$H^{(k)} = \Pnorm^k H^{(0)}$,
where $H^{(0)} \in \mathbb{R}^{n \times d}$ is the input feature
matrix.
Using the eigendecomposition $\Pnorm = U(I-\Lambda)U^\top$,
\[
    \Pnorm^k = U(I-\Lambda)^k U^\top
    = \sum_{i=1}^{n} (1-\lambda_i)^k u_i u_i^\top.
\]
The $(v,u)$ entry of $\Pnorm^k$ measures the influence of node $u$
on node $v$ after $k$ steps:
\begin{equation}
    [\Pnorm^k]_{vu}
    = \sum_{i=1}^{n}(1-\lambda_i)^k [u_i]_v [u_i]_u.
    \label{eq:influence}
\end{equation}
After mean-centering (removing the $\lambda_1 = 0$ DC component),
the decay of each eigenmode is governed by $(1-\lambda_i)^k$.
The slowest-decaying non-constant mode corresponds to
$\lambda_2 = \gap(G)$, with retention factor $(1-\gap(G))^k$.

\begin{lemma}[Influence decay]
\label{lem:decay}
For mean-centered features and any $v \neq u$,
\[
    |[\Pnorm^k]_{vu}| \leq (1-\gap(G))^k.
\]
\end{lemma}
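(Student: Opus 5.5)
The plan is to make precise what ``mean-centered'' means at the operator level, and then bound a single matrix entry by an operator norm. Since $u_1 \propto D^{1/2}\mathbf{1}$ is the eigenvector for $\lambda_1 = 0$, removing the DC component amounts to replacing $\Pnorm^k$ by the centered propagator
\[
    M_k \;=\; \Pnorm^k - u_1u_1^\top \;=\; \sum_{i=2}^{n} (1-\lambda_i)^k\, u_i u_i^\top ,
\]
which is exactly the influence matrix in \eqref{eq:influence} with the $i=1$ term dropped. I will therefore read the quantity $[\Pnorm^k]_{vu}$ in the statement as $[M_k]_{vu}$, the influence of $u$ on $v$ through non-constant modes.

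The key steps, in order, are as follows. First, write $[M_k]_{vu} = e_v^\top M_k e_u$. Cauchy--Schwarz then gives $|[M_k]_{vu}| \le \|M_k\|_2$, since $\|e_v\| = \|e_u\| = 1$. Second, $M_k$ is symmetric with orthonormal eigenvectors $u_2,\dots,u_n$ and eigenvalues $(1-\lambda_i)^k$. Hence
\[
    \|M_k\|_2 = \max_{i\ge 2} |1-\lambda_i|^k .
\]
Third, show that $\max_{i \ge 2}|1-\lambda_i| = 1-\gap(G)$. For $i \ge 2$ we have $1-\lambda_i \le 1-\lambda_2$ by ordering. On the other side, $-(1-\lambda_i) = \lambda_i - 1 \le \lambda_n - 1$. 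So it suffices to have $\lambda_n \le 2 - \lambda_2$. Combining the three steps gives the claimed bound. The hypothesis $v \neq u$ is not needed by this argument; it only signals that off-diagonal influence is what matters.

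The third step is the real obstacle. The condition $\lambda_n \le 2-\gap(G)$ fails for bipartite graphs, where $\lambda_n = 2$ and the mode $(-1)^k$ never decays. It also fails for any graph whose top eigenvalue is closer to $2$ than $\lambda_2$ is to $0$. In such cases the literal statement is false: an even path $P_n$ with $u,v$ at opposite ends gives a counterexample.

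My first approach would be to make explicit the assumption implicit in the paper's remark that the slowest-decaying mode is $\lambda_2$, namely $\lambda_n \le 2-\gap(G)$. Under that assumption the proof above is complete. As a fallback that needs no assumption, I would use the lazy propagator $\tfrac12(I+\Pnorm)$, the standard self-loop variant used in GCNs. Its eigenvalues $1-\lambda_i/2$ all lie in $[0,1]$, so the same argument yields $|[M_k]_{vu}| \le (1-\gap(G)/2)^k$ unconditionally. A third option is to state the lemma with $\max_{i\ge 2}|1-\lambda_i|$ in place of $1-\gap(G)$.
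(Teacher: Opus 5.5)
Your core estimate is the paper's in different packaging. The paper applies the triangle inequality to the eigen-expansion \eqref{eq:influence} over $i\ge 2$, then uses Cauchy--Schwarz with orthonormality of the rows of $U$. You instead bound $e_v^\top M_k e_u$ by $\|M_k\|_2$. Both give $|[M_k]_{vu}|\le\max_{i\ge 2}|1-\lambda_i|^k$, and reading ``mean-centered'' as deleting $u_1u_1^\top$ is exactly what the paper's proof does.

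The divergence is the last step, and there you are right and the paper is not. The paper factors out $(1-\gap(G))^k$ citing only $\lambda_i\ge\lambda_2$. That bounds $1-\lambda_i$ from above, but not $|1-\lambda_i|$ once $\lambda_i>1$. So the paper's proof has precisely the gap you flag, and the lemma as stated is false.

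Your counterexample checks out on $P_4$, whose spectrum $\{0,\tfrac12,\tfrac32,2\}$ the paper itself computes. Take $v,u$ the two endpoints and $k=4$. Bipartite parity gives $[\Pnorm^4]_{vu}=0$, while $[u_1]_v[u_1]_u=\sqrt{d_vd_u}/(2|E|)=1/6$. So the centered entry has magnitude $1/6>(1/2)^4=1/16$.

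The statement also fails for $K_n$ at odd $k$, where $\gap(G)=n/(n-1)>1$ makes the right-hand side negative. Your condition $\lambda_n\le 2-\gap(G)$ excludes this automatically, since it forces $\gap(G)\le 1$. It is in fact equivalent to $\max_{i\ge 2}|1-\lambda_i|=1-\gap(G)$.

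Your broader claim, failure for large $k$ whenever $\lambda_n>2-\gap(G)$, is also true, but it needs a line of justification. For large $k$ the $\lambda_n$-eigenprojection dominates $M_k$. Because it annihilates the entrywise-positive vector $u_1$, it cannot be diagonal. Hence some off-diagonal entry of $M_k$ is of order $|1-\lambda_n|^k$, which for $\gap(G)\le 1$ eventually exceeds $(1-\gap(G))^k$; $\gap(G)>1$ occurs only for complete graphs.

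Of your repairs, the first (assume $\lambda_n\le 2-\gap(G)$) and the third (replace $1-\gap(G)$ by $\max_{i\ge2}|1-\lambda_i|$) are what the paper's argument actually establishes. Both keep the statement about $\Pnorm$, so either should replace the lemma as written. The lazy-walk bound is correct, but it concerns the different propagator $\tfrac12(I+\Pnorm)$. Adopting it would change the certificate to $1-(1-\gap(G)/2)^k$ rather than justify $\SFC$ as defined.
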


\begin{proof}
Applying the triangle inequality to \eqref{eq:influence} with
$i \geq 2$ and using $\lambda_i \geq \lambda_2 = \gap(G)$,
\[
    |[\Pnorm^k]_{vu}|
    \leq (1-\gap(G))^k \sum_{i=2}^{n} |[u_i]_v|\,|[u_i]_u|
    \leq (1-\gap(G))^k,
\]
where the last step applies Cauchy-Schwarz and orthonormality of
$\{u_i\}$.
\end{proof}

\paragraph{Information-theoretic consequence.}
When node features are drawn i.i.d.\ from
$\mathcal{N}(0,\sigma^2 I_d)$, the mutual information between the
representation of $v$ after $k$ steps and the original feature of
a distant node $u$ satisfies
\[
    I(h_v^{(k)};\, h_u^{(0)})
    \;\leq\;
    \frac{d}{2}\log\!\left(1 +
        \frac{(1-\gap(G))^{2k}}{1-(1-\gap(G))^{2k}}
    \right).
\]
While this upper bound decays to zero as $k \to \infty$
(reflecting over-smoothing), for small depths $k$ on graphs with
severe bottlenecks ($\gamma \to 0$), the true mutual information
between distant nodes remains fundamentally restricted by the
graph's diameter.
This formalises the intuition that topological over-squashing
destroys long-range information, and motivates $(1-\gamma(G))^k$
as the central quantity in our certificate.

\begin{figure}[!ht] 
    \centering
    \includegraphics[height=0.70\textheight, keepaspectratio]{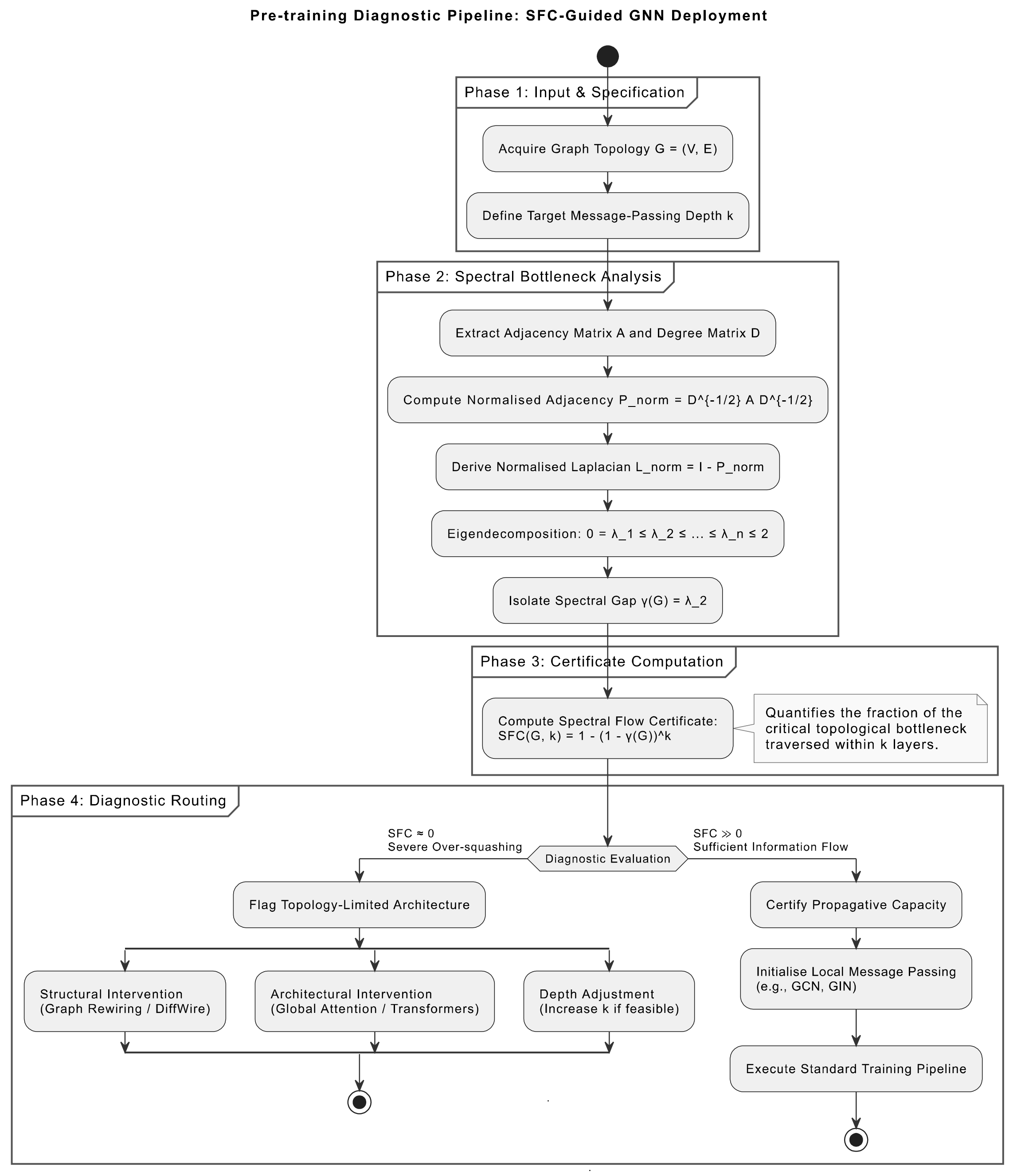}
    \caption{Pre-training Diagnostic Pipeline: Using the Spectral Flow Certificate (SFC) to guide GNN deployment, highlighting structural and architectural interventions for topology-limited graphs.}
    \label{fig:process_flow}
\end{figure}

\subsection{Definition and properties of SFC}

\begin{definition}[Spectral Flow Certificate]
For a connected graph $G$ and message-passing depth $k \geq 1$,
\[
    \SFC(G,k) = 1 - (1-\gap(G))^k.
\]
\end{definition}

$\SFC(G,k) \in (0,1]$ for all connected graphs with
$\gap(G) \in (0,1]$.
Values near zero indicate that the critical spectral bottleneck has
not been traversed within the available depth.
Values near one indicate substantial mixing.
Algorithm~\ref{alg:sfc} gives the complete computation.

\begin{algorithm}[t]
\caption{Computing the Spectral Flow Certificate}
\label{alg:sfc}
\begin{algorithmic}[1]
\Require Adjacency matrix $A$, message-passing depth $k$
\State $D \gets \mathrm{diag}(A\mathbf{1})$
\State $L_{\mathrm{norm}} \gets I - D^{-1/2}AD^{-1/2}$
\State Compute eigenvalues $0 = \lambda_1 \leq \lambda_2 \leq \cdots$
\State $\gap \gets \lambda_2$
\State \Return $1-(1-\gap)^k$
\end{algorithmic}
\end{algorithm}

\begin{proposition}[Monotonicity]
\label{prop:mono}
For a connected graph with $0 < \gap(G) \leq 1$:
\begin{enumerate}
    \item[(i)] $\SFC(G,k)$ is strictly increasing in $k$.
    \item[(ii)] For fixed $k$, $\SFC(G,k)$ is strictly increasing
        in $\gap(G)$.
    \item[(iii)] At fixed $k$, $\SFC$ and $\gap$ induce identical
        rankings over graphs.
\end{enumerate}
\end{proposition}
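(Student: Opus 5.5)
Write $q = 1-\gap(G)$. The hypothesis $0<\gap(G)\le 1$ gives $q\in[0,1)$, and then $\SFC(G,k) = 1-q^k$. All three parts reduce to elementary monotonicity facts about the power map $t\mapsto t^k$ on $[0,1)$. No spectral input is needed beyond the definition of $\SFC$ and the range of $\gap(G)$. I will take (ii) first and derive (iii) from it, since (iii) is a formal consequence.

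For (i), compare consecutive depths:
\[
\SFC(G,k+1)-\SFC(G,k) = q^k - q^{k+1} = q^k(1-q) = (1-\gap(G))^k\,\gap(G).
\]
This is strictly positive exactly when $0<q<1$, that is, when $0<\gap(G)<1$. Hence $\SFC(G,\cdot)$ is strictly increasing in $k$ on that range. The main obstacle is the boundary case $\gap(G)=1$, where $q=0$ and $\SFC(G,k)=1$ for every $k\ge 1$. Here the sequence is constant rather than strictly increasing. I would state this explicitly: strict monotonicity in (i) holds for $0<\gap(G)<1$, and at $\gap(G)=1$ the certificate is already saturated at its maximum value $1$, so only weak monotonicity holds.

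For (ii), fix $k\ge1$ and take $0<\gap_1<\gap_2\le 1$, so that $1>q_1=1-\gap_1>q_2=1-\gap_2\ge 0$. The map $t\mapsto t^k$ is strictly increasing on $[0,\infty)$, so $q_1^k>q_2^k$, and therefore $1-q_1^k<1-q_2^k$. Equivalently, $\frac{d}{d\gap}\SFC = k(1-\gap)^{k-1}>0$ on $(0,1)$, and continuity extends strict monotonicity to the closed endpoint $\gap=1$. Part (ii) therefore holds on the full stated range without exception.

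For (iii), fix $k$ and let $f_k(\gap)=1-(1-\gap)^k$. By (ii), $f_k$ is strictly increasing on $(0,1]$. For any two graphs $G,G'$ this gives $\gap(G)<\gap(G')\iff \SFC(G,k)<\SFC(G',k)$, and $\gap(G)=\gap(G')\iff\SFC(G,k)=\SFC(G',k)$, since a strictly increasing map is injective. So the two scores induce the same ordering, ties included, and hence identical rankings. This also shows that any rank-based statistic, such as Spearman correlation with accuracy, is identical for $\SFC$ and $\gap$ at a single fixed depth. Differences between them can appear only when $k$ varies across the graph–depth pairs being compared.
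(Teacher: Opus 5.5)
Your proof is correct and follows the paper's own argument: the consecutive difference $\gamma(1-\gamma)^k$ for (i), strict monotonicity of $t\mapsto t^k$ (equivalently the derivative $k(1-\gamma)^{k-1}$) for (ii), and (iii) as an immediate corollary of (ii). Your boundary caveat is right and corrects the paper. Its proof of (i) infers $\gamma(1-\gamma)^k>0$ from $1-\gamma\ge 0$, which fails at $\gamma=1$. That value is genuinely attained, e.g.\ by stars $K_{1,m}$ with $m\ge 2$ and by $C_4$, where $\SFC(G,k)\equiv 1$. Your direct power-map argument for (ii) likewise avoids the paper's claim $k(1-\gamma)^{k-1}>0$, which is false at $\gamma=1$ for $k\ge 2$.
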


\begin{proof}
For (i): $\SFC(G,k{+}1) - \SFC(G,k) = \gap(G)(1{-}\gap(G))^k > 0$
since $\gap(G) > 0$ and $1-\gap(G) \geq 0$.
For (ii): $\partial \SFC / \partial \gap = k(1-\gap)^{k-1} > 0$.
For (iii): follows from (ii) since $\SFC$ is a monotone
transformation of $\gap$ at fixed $k$.
g\end{proof}

Proposition~\ref{prop:mono}(iii) explains why raw spectral gap is
a strong comparator at a single fixed depth.
The informative comparison is therefore the pooled graph-depth
setting where $k$ varies: $\gap(G)$ is static while
$\SFC(G,k)$ adapts to the available depth.

\paragraph{Relationship to path graphs.}
For the path graph $P_n$, the normalised Laplacian spectral gap is
$\gap(P_n) = 1 - \cos(\pi/(n-1))$.
For large $n$, $\gap(P_n) \approx \pi^2/(2(n-1)^2) \to 0$, so
$\SFC(P_n, k) \approx k\pi^2/(2(n-1)^2) \to 0$,
correctly predicting GNN failure on long path graphs regardless of
fixed depth.
The bound in Lemma~\ref{lem:decay} is tight to leading order at
the node pair most aligned with the Fiedler vector.

\section{Experimental Setup}

To demonstrate the computational efficiency and accessibility of the Spectral Flow Certificate (SFC), all empirical validations, including spectral decompositions and GNN training pipelines, were executed on a standard consumer-grade workstation. The complete hardware and software specifications are detailed in Table~\ref{tab:hardware}. 

\begin{table}[htbp]
    \centering
    \caption{Hardware Specifications for Experimental Setup}
    \label{tab:hardware}
    \begin{tabular}{@{}ll@{}}
        \toprule
        \textbf{Component} & \textbf{Specification} \\
        \midrule
        Processor (CPU)    & Intel Core i5-12500H \\
        Graphics (GPU)     & NVIDIA GeForce RTX 3050 (Laptop GPU) \\
        System Memory      & 16 GB RAM \\
        \bottomrule
    \end{tabular}
\end{table}

\paragraph{Task.}
We evaluate on a \emph{query-majority} long-range task.
For each graph instance, scalar node signals are sampled from
$\mathcal{N}(0,1)$ and one query node is designated by a binary
indicator feature.
A fixed-graph GCN must predict the sign of the global signal sum
from the query node after $k$ message-passing layers.
Solving this task correctly requires the query node to aggregate
information from all other nodes; graphs with poor topological
connectivity make this impossible within shallow depth.
This design ensures that task difficulty is topology-driven and
controlled, not confounded by feature quality or label noise.
The query node is chosen as the graph centre (minimum eccentricity)
for reproducibility.

\paragraph{Model.}
We use a fixed-graph GCN with hidden dimension 32, trained with
Adam ($\eta = 0.01$, weight decay $10^{-4}$) for up to 70 epochs
with early stopping (patience 12).
Each reported accuracy is the mean over two to three independent
seeds to account for training variance.
The task is evaluated at depths $k \in \{2, 3, 4\}$.

\paragraph{Graph suites.}
Synthetic graphs include path graphs ($n = 5$ to $100$),
cycle graphs ($n = 10$ to $50$), grid graphs (from $3{\times}4$
to $5{\times}10$), 3-regular random graphs ($n = 20$ to $76$),
and Erd\H{o}s-R\'enyi graphs ($n=30$, $p \in \{0.1, 0.2, 0.3,
0.4\}$), spanning SFC values from near zero to above $0.8$.
Real benchmark graph topologies are sampled from ENZYMES, PROTEINS,
and MUTAG \citep{dwivedi2022long} (up to 50 connected graphs per
dataset, 150 total).
These are open-access benchmark datasets curated by the TUDataset
collection \citep{morris2020tudataset} and are freely available for academic research.
A small real single-network validation set contains ten named
graphs: Karate, Davis, Florentine, LesMis, Cora, CiteSeer, Texas,
Wisconsin, Cornell, and Chameleon.

\paragraph{Baselines.}
We compare SFC against: (i)~raw spectral gap $\gap(G)$;
(ii)~average effective resistance
$\bar{R}(G) = \mathrm{tr}(L^+)/n$
\citep{black2023understanding}; and (iii)~graph diameter.
All baselines are computed before training.

\section{Results}

\begin{table}[!ht]
\centering
\small
\caption{%
Key findings. SFC accurately predicts the performance of trained GNNs for both synthetic and real-world graphs on benchmark graph topologies.
Spectral gap comparison is most useful when considering pooled depth (Exp.~3), because at constant $k$, SFC is a monotonic transformation of $\gap$ (Proposition~\ref{prop:mono}(iii)).
}
\label{tab:main_results}
\begin{tabular}{llrrr}
\toprule
Exp. & Setting & $r$ & $\Rtwo$ & $p$-value \\
\midrule
1 & Synthetic, $k=2$ & 0.954 & 0.910 & $1.59\!\times\!10^{-13}$ \\
1 & Synthetic, $k=3$ & 0.939 & 0.881 & $4.03\!\times\!10^{-12}$ \\
1 & Synthetic, $k=4$ & 0.929 & 0.863 & $2.04\!\times\!10^{-11}$ \\
\midrule
2 & ENZYMES  ($n{=}50$) & 0.876 & 0.767 & $8.85\!\times\!10^{-17}$ \\
2 & PROTEINS ($n{=}50$) & 0.762 & 0.581 & $1.29\!\times\!10^{-10}$ \\
2 & MUTAG    ($n{=}50$) & 0.837 & 0.701 & $3.68\!\times\!10^{-14}$ \\
\midrule
3 & Pooled depths, SFC  & 0.940 & 0.884 & $7.52\!\times\!10^{-36}$ \\
3 & Pooled depths, $\gap$ & 0.926 & 0.858 & $1.04\!\times\!10^{-32}$ \\
\midrule
4 & SFC vs eff.\ resistance
  & \multicolumn{2}{c}{$\Rtwo$: $0.869$ vs $0.313$} & \\
4 & SFC vs diameter
  & \multicolumn{2}{c}{$\Rtwo$: $0.869$ vs $0.316$} & \\
\midrule
5 & Single-network validation ($n{=}10$)
  & 0.664 & 0.440 & $3.65\!\times\!10^{-2}$ \\
\bottomrule
\end{tabular}
\end{table}

\paragraph{Predictive power on synthetic graphs (Exp.\ 1).}
When testing twenty-five synthetic graph families, one can observe excellent predictive abilities, with $\Rtwo$ values for SFC being $0.910$, $0.881$, and $0.863$ at different depths of $k=2$, $3$ and $4$ (all $p < 10^{-11}$). The gradual reduction in effectiveness at greater depths is understandable because with increasing message-passing iterations, even highly limited graph structures can receive sufficient information to perform adequately, thus inherently reducing the spread in their accuracies. This behavior can be seen clearly from Figure~\ref{fig:synthetic_all}: path and cycle graphs are always found in the low-SFC, low-accuracy area, while regular and Erd\H{o}s-R\'enyi graphs are located in the high-SFC, high-accuracy region.

\begin{figure}[!ht]
\centering
\begin{minipage}{0.32\linewidth}
    \centering
    \includegraphics[width=\linewidth]{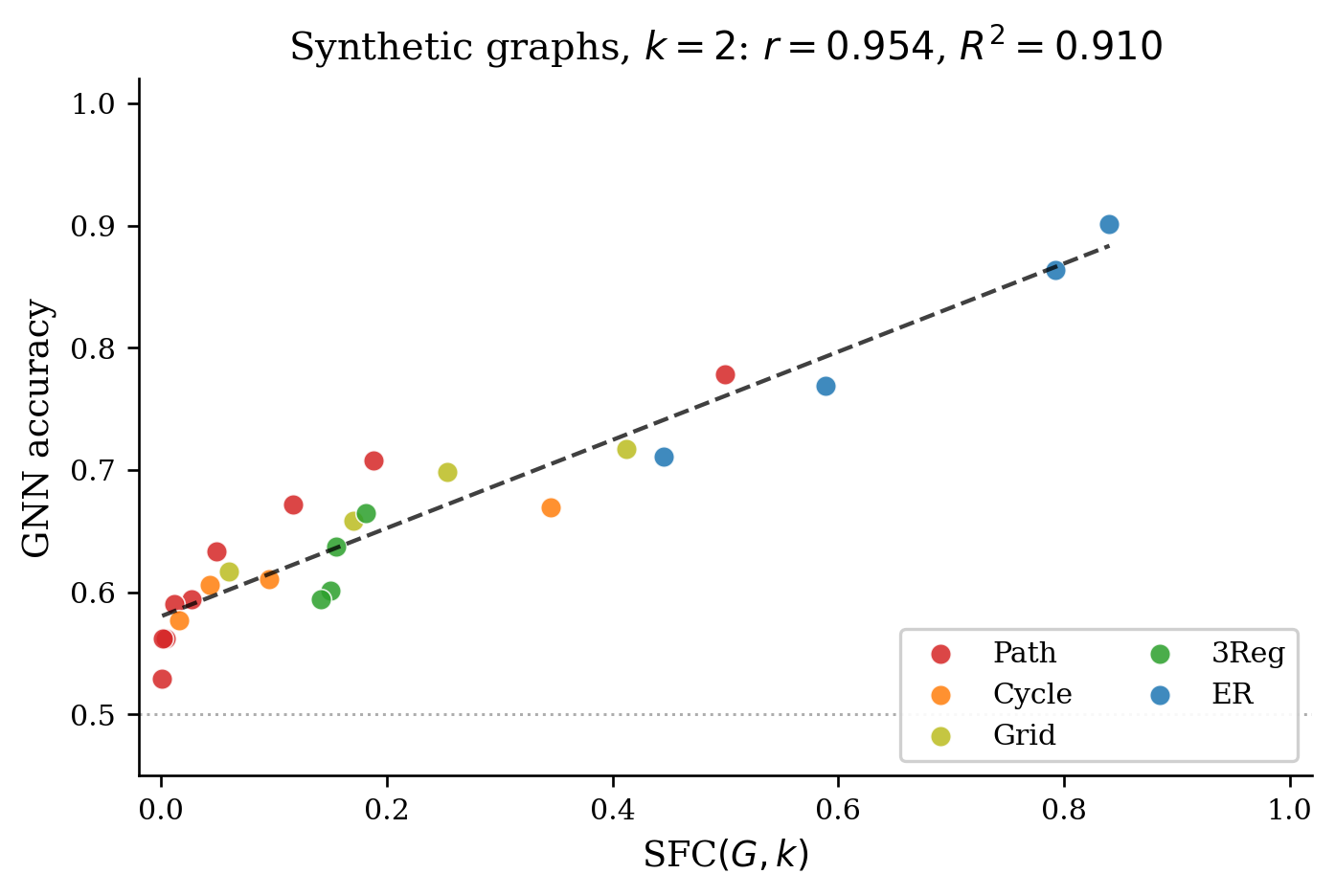}\\
    \small $k=2$, $\Rtwo=0.910$
\end{minipage}
\hfill
\begin{minipage}{0.32\linewidth}
    \centering
    \includegraphics[width=\linewidth]{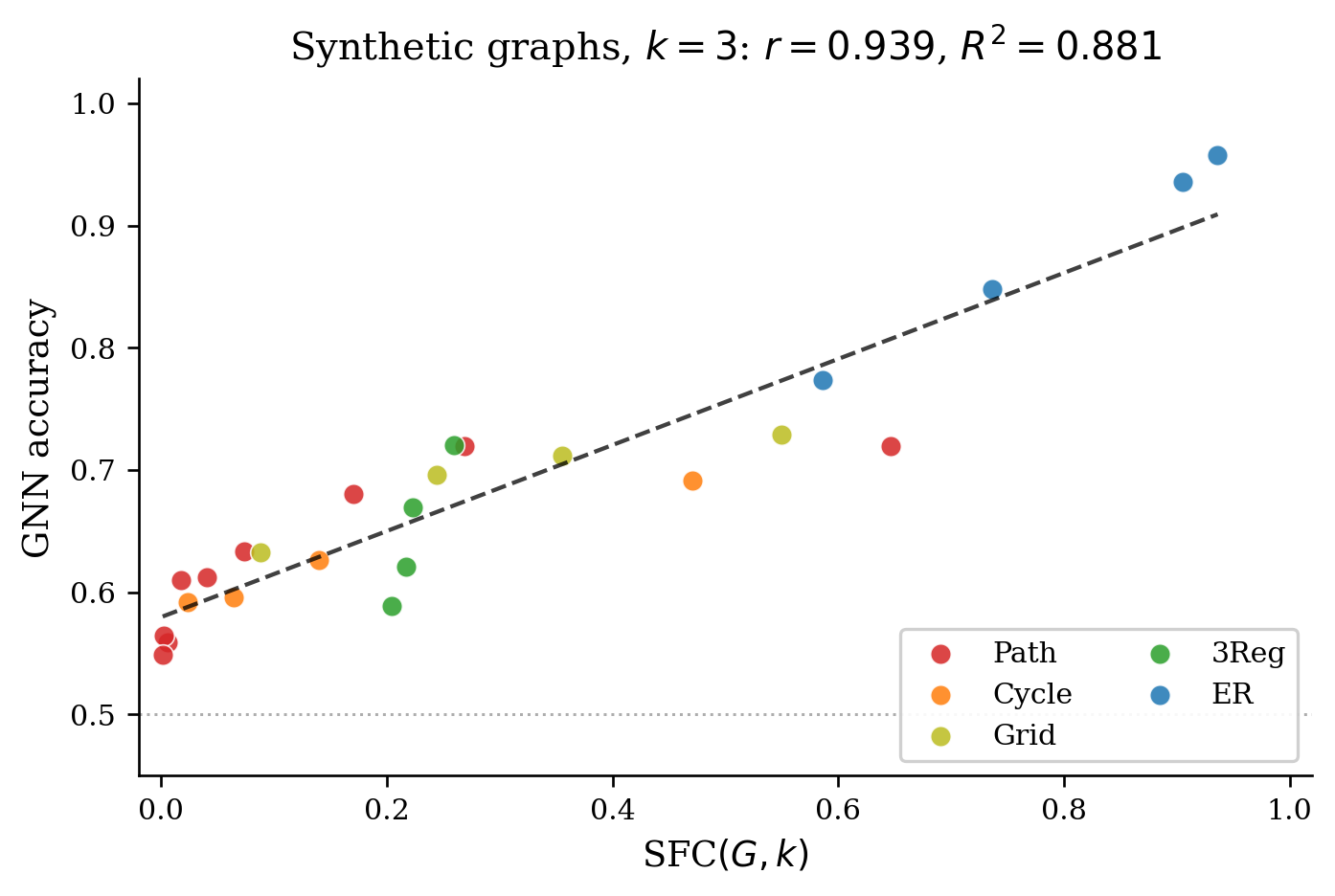}\\
    \small $k=3$, $\Rtwo=0.881$
\end{minipage}
\hfill
\begin{minipage}{0.32\linewidth}
    \centering
    \includegraphics[width=\linewidth]{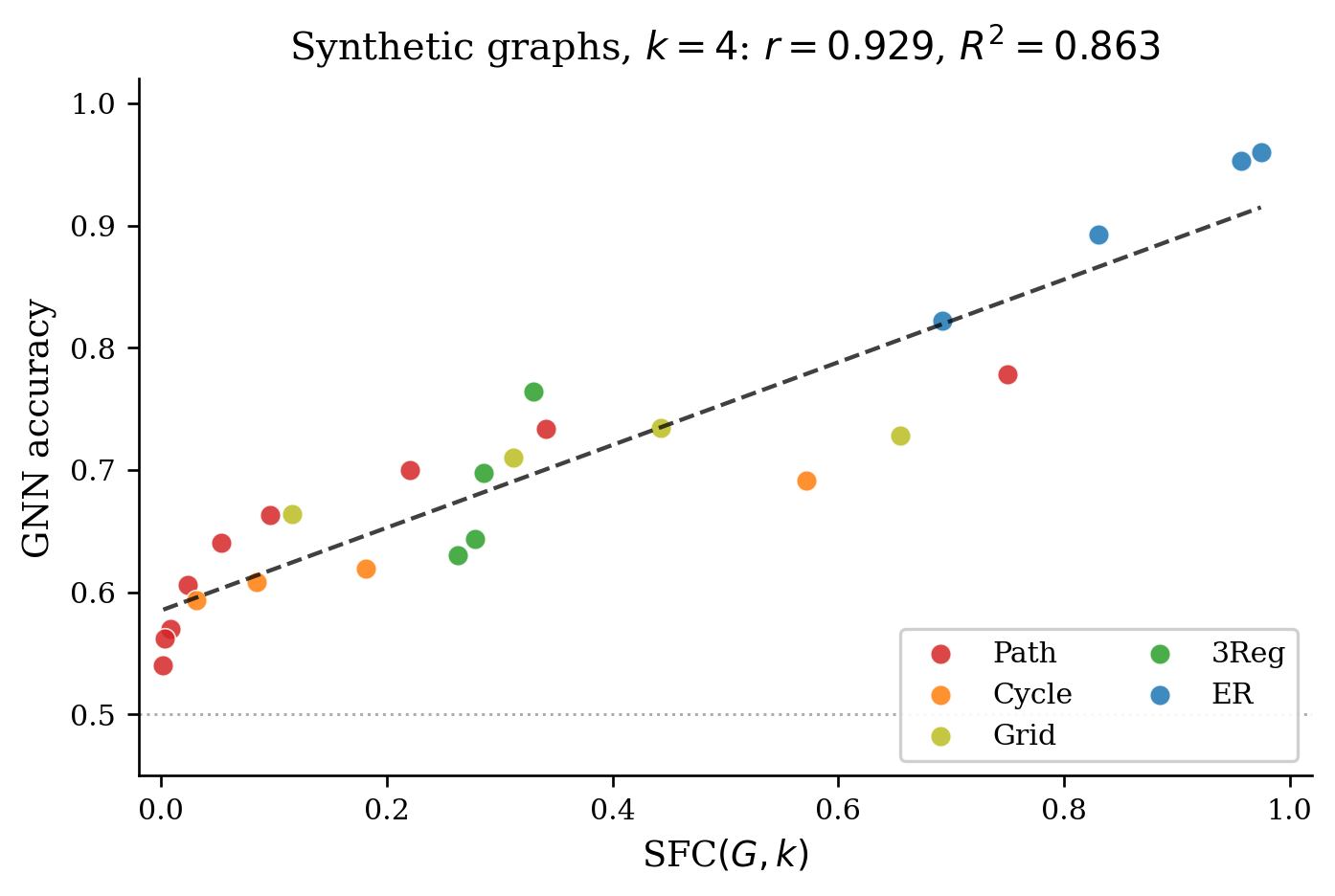}\\
    \small $k=4$, $\Rtwo=0.863$
\end{minipage}
\caption{%
Synthetic validation: SFC versus trained GNN accuracy at three
message-passing depths.
Each point is one synthetic graph.
Colours distinguish graph families:
path (red), cycle (orange), grid (yellow),
3-regular (green), Erd\H{o}s-R\'enyi (blue).}
\label{fig:synthetic_all}
\end{figure}

\paragraph{Benchmark topologies (Exp.\ 2).}
The results from Figure~\ref{fig:benchmark} show that SFC can be a very reliable predictor when analyzing each dataset independently for 150 molecular graphs: $\Rtwo = 0.767$ for ENZYMES, $0.701$ for MUTAG, and $0.581$ for PROTEINS. While this metric will naturally decrease if we consider all three datasets together ($r=0.608$, $\Rtwo=0.369$), such a difference is only caused by different sizes of the considered graphs affecting the calibration process of the purely topological certificate.

\begin{figure}[!ht]
\centering
\includegraphics[width=0.98\linewidth]{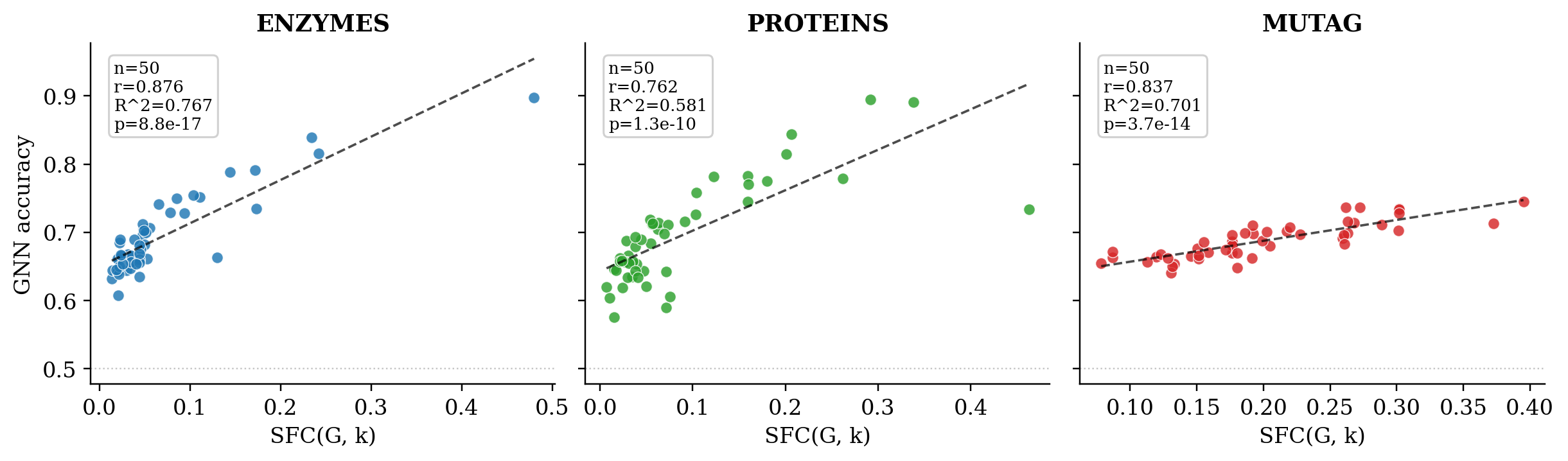}
\caption{%
Benchmark topologies from ENZYMES, PROTEINS, and MUTAG.
Each panel shows 50 connected graphs from one dataset.
SFC predicts trained accuracy across all three independent datasets.}
\label{fig:benchmark}
\end{figure}

\paragraph{Depth-aware advantage (Exp.\ 3).}
By design, SFC and $\gap$ yield identical rankings at a given depth (Proposition~\ref{prop:mono}(iii)) and therefore share the same initial benchmarking performance. But the real test is whether their relative performance persists in the pooled case, where $k$ ranges through $k \in \{2, 3, 4\}$ and each graph-depth pair $(G, k)$ constitutes one sample. As evident from Figure~\ref{fig:depth}, SFC proves to be a clear winner here, obtaining an $\Rtwo$ of $0.884$ versus only $0.858$ of the raw spectral gap---a difference of $+2.54$ percentage points. This result could not have been any better as it fully highlights the key insight behind our work: while $\gap(G)$ stays constant for a given $G$, $\SFC(G,k)$ varies, accounting for the additional information gain enabled by depth.

\begin{figure}[!ht]
\centering
\includegraphics[width=0.98\linewidth]{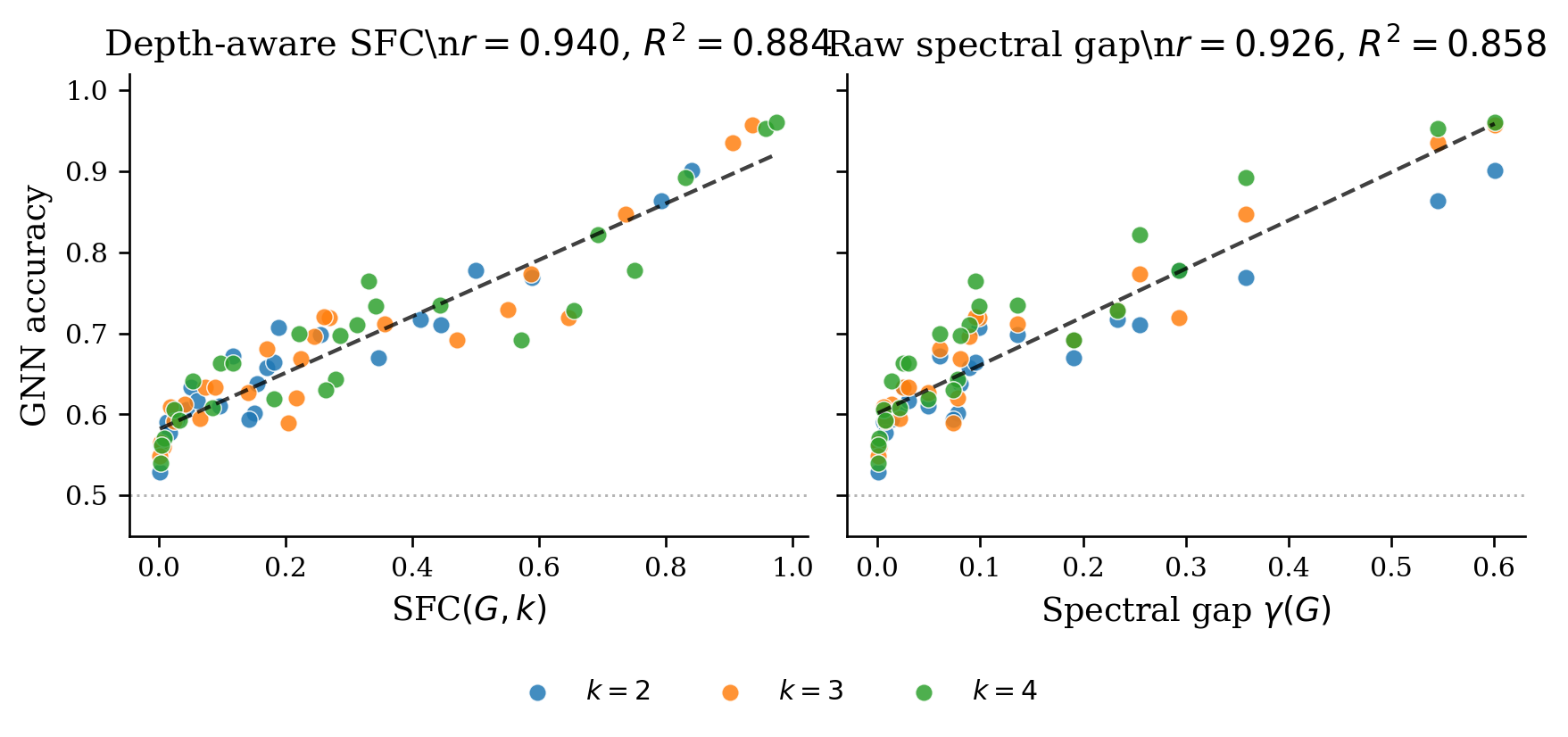}
\caption{%
Pooled depth comparison.
Each point is one (graph, depth) pair across $k\in\{2,3,4\}$.
SFC (left, $\Rtwo=0.884$) improves over raw spectral gap
(right, $\Rtwo=0.858$) by encoding both topology and depth.}
\label{fig:depth}
\end{figure}

\paragraph{Non-spectral baselines (Exp.\ 4).}
SFC substantially outperforms traditional non-spectral metrics at $k=3$, achieving an $\Rtwo$ of $0.869$ compared to just $0.313$ for effective resistance and $0.316$ for graph diameter. As plotted in Figure~\ref{fig:baselines}, while these standard baselines capture the expected negative correlation—where wider or more resistant graphs yield lower GNN accuracy—their predictive signals are markedly noisier and far less reliable than the spectral flow approach.

\begin{figure}[!ht]
\centering
\includegraphics[width=0.98\linewidth]{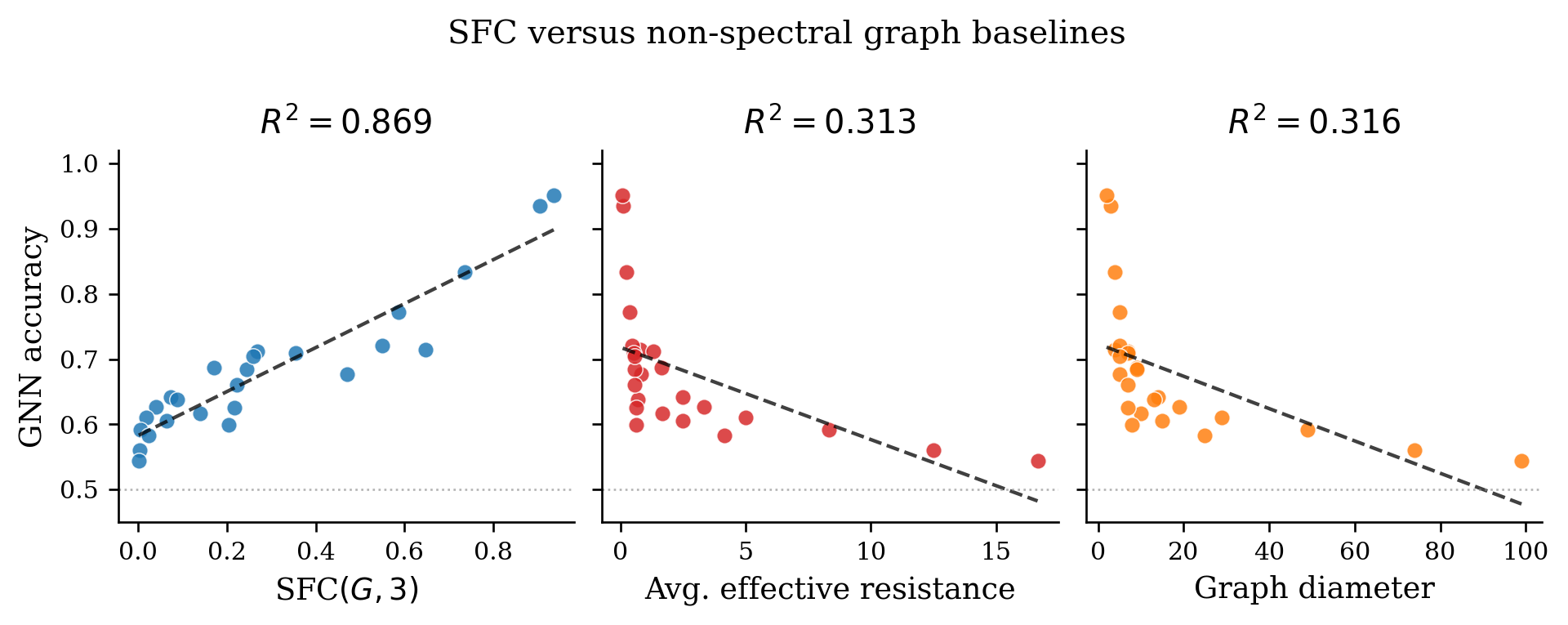}
\caption{%
SFC (left) versus average effective resistance (centre) and graph
diameter (right) as predictors of trained GNN accuracy.
SFC captures the long-range propagation bottleneck more directly.}
\label{fig:baselines}
\end{figure}

\paragraph{Real single-network validation (Exp.\ 5).}
On ten named real-world graphs, SFC gives a positive pooled result
($r=0.664$, $\Rtwo = 0.440$, $p=0.037$).
This experiment is treated as corroborating evidence rather than a
primary hypothesis test because the sample size is small ($n=10$)
and the networks span regimes where over-squashing is not the only
failure mode.
The main statistical evidence comes from Experiments 1 and 2.
\section{Limitations}

SFC is a diagnostic for topologies, not an architectural fix.
The experiments focus on long-range propagation failure caused by
topology alone, without claiming to address all causes of variation
in GNN performance.
Features, noisy labels, heterophily, and oversmoothing are
distinct causes of failure unrelated to SFC.
SFC's certification requires undirected graphs and the normalized
Laplacian, but other structures might need a different spectral
operator \citep{zhang2021magnet,gravina2026advection}.
Lastly, SFC relies only on the second eigenvalue; topologies that
present a bottleneck among several small eigenvalues might
benefit from a multi-eigenvalue version, which will be left for
future work.

\section{Conclusion}

We proposed the Spectral Flow Certificate, a training-free scalar
that predicts GNN long-range accuracy from graph structure and
depth alone, requiring nothing more than a single eigenvalue
computation on the normalised Laplacian.
SFC rests on a clean theoretical foundation: it measures what
fraction of the critical spectral bottleneck can be traversed
within $k$ message-passing steps, and this quantity is provably
monotone in both depth and spectral gap.
The information-theoretic consequence is equally direct: when SFC
is near zero, the mutual information between distant node
representations approaches zero regardless of training, making
long-range task success impossible by construction.

From an empirical standpoint, it is evident that the SFC metric serves as a very accurate predictor of trained GNN performance. On synthetic graphs, it shows an $\Rtwo > 0.86$ on twenty-five different families of graphs and at three depths, with the same predictive power holding up under real-world molecular datasets ($\Rtwo$ ranging from $0.58$--$0.77$ on three separate datasets). Compared to non-spectral baseline metrics such as effective resistance and graph diameter, the certificate explains more than twice as much variance in GNN performance. Importantly, taking into account the number of layers, SFC provides $2.54$ percent more predictive power than spectral gap alone.

Ultimately, SFC distills complex topological bottlenecks down to a single, interpretable scalar, bridging the gap between pure spectral theory and applied GNN deployment. Our findings demonstrate that one inexpensive eigenvalue computation is enough to flag topology-limited graphs before a single gradient is ever calculated. As a result, SFC serves as a principled, early-stage filter for pipeline design. By deploying it, researchers can bypass the vast computational expense of training structurally constrained networks and make immediate, informed decisions regarding graph rewiring or architectural interventions.


\FloatBarrier
\clearpage
\appendix

\section{Mathematical Derivations}
\label{app:math}

\subsection{Eigenmode retention under normalised message passing}

Let $\Pnorm = D^{-1/2}AD^{-1/2}$ and $L_{\mathrm{norm}} = I - \Pnorm$.
If $L_{\mathrm{norm}} u_i = \lambda_i u_i$, then
$\Pnorm u_i = (1-\lambda_i)u_i$ and $\Pnorm^k u_i = (1-\lambda_i)^k u_i$.
The slowest non-constant mode is $\lambda_2 = \gap(G)$, giving
retention $(1-\gap(G))^k$.
SFC is the complementary flow: $\SFC(G,k) = 1-(1-\gap(G))^k$.

\subsection{Monotonicity derivations}

For $0 < \gap \leq 1$ and $k \geq 1$:
\[
    \SFC(G,k+1) - \SFC(G,k) = \gap(1-\gap)^k \geq 0;
    \qquad
    \frac{\partial \SFC}{\partial \gap} = k(1-\gap)^{k-1} > 0.
\]
At fixed $k$, SFC and $\gap$ rank graphs identically.
Across varying $k$, SFC changes while $\gap$ does not.

\subsection{Path graph sanity check}

For $P_n$: $\gap(P_n) = 1 - \cos(\pi/(n-1))$.
For $P_4$: $\gap(P_4) = 1 - \cos(\pi/3) = 1/2$,
so $\SFC(P_4, 3) = 1 - (1/2)^3 = 0.875$.
Verified numerically: \texttt{eigvalsh} returns
\texttt{[0.0, 0.5, 1.5, 2.0]}.
For large $n$: $\SFC(P_n, k) \approx k\pi^2/(2(n-1)^2) \to 0$.

\subsection{Mutual information bound}

Assume $h_u^{(0)} \sim \mathcal{N}(0, \sigma^2 I_d)$ i.i.d.
Write $h_v^{(k)} = [\Pnorm^k]_{vu} h_u^{(0)} + N$ where
$N = \sum_{w \neq u} [\Pnorm^k]_{vw} h_w^{(0)}$.
By Lemma~\ref{lem:decay}, signal variance $\leq (1-\gap)^{2k}\sigma^2$
and noise variance $\geq \sigma^2(1-(1-\gap)^{2k})$.
Applying the Gaussian mutual information formula:
\[
    I(h_v^{(k)}; h_u^{(0)})
    \leq \frac{d}{2}\log\!\left(1 +
    \frac{(1-\gap)^{2k}}{1-(1-\gap)^{2k}}\right).
\]

\subsection{Preliminary full-spectrum variant}

Using slow modes $\lambda_i \in (0,1]$ with gap-proximity weights:
\[
    w_i = \frac{\exp[-5(\lambda_i - \gap)]}
              {\sum_{j:\lambda_j \in (0,1]} \exp[-5(\lambda_j-\gap)]},
    \qquad
    \SFC_{\mathrm{full}}(G,k)
    = 1 - \textstyle\sum_{i:\lambda_i\in(0,1]} w_i (1-\lambda_i)^k.
\]
This is mathematically well-defined but empirically not robust on
benchmark topologies and is excluded from main claims.

\section{Additional Experimental Figures}
\label{app:figures}

\begin{figure}[ht]
\centering
\includegraphics[width=0.92\linewidth]{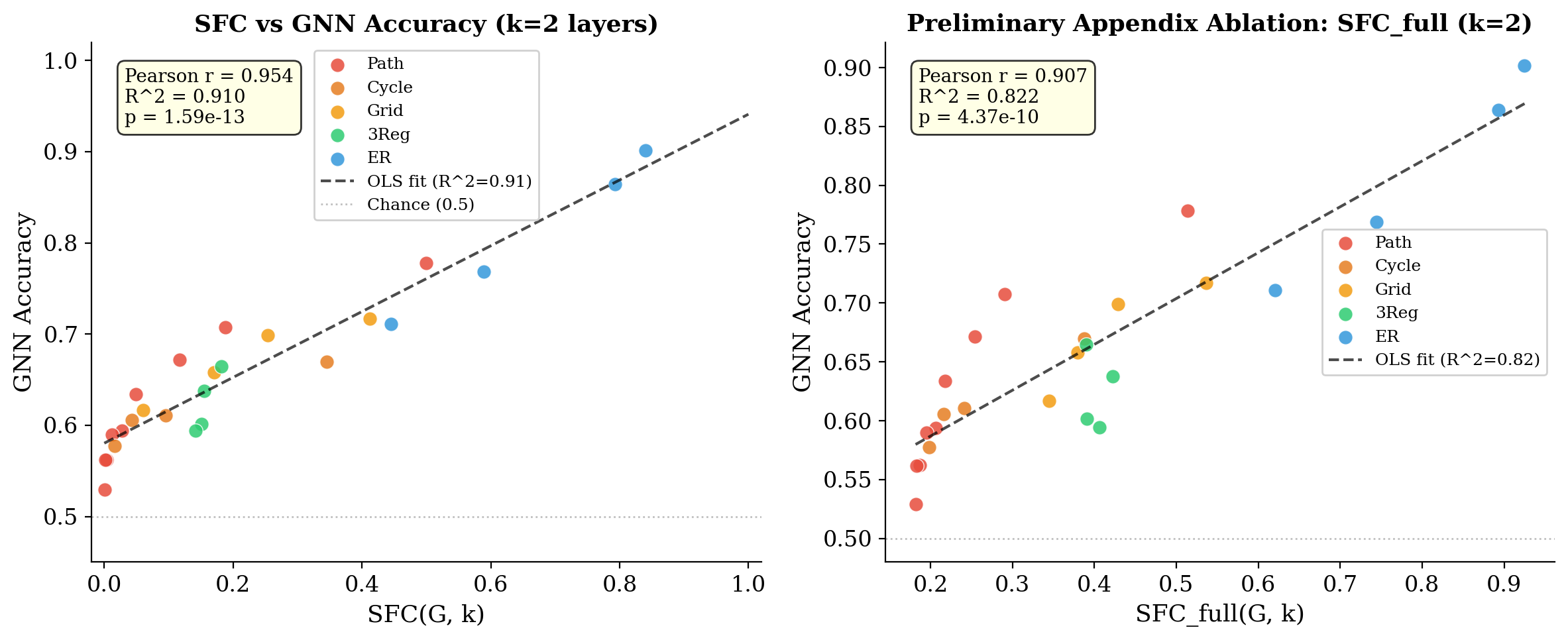}
\caption{Full synthetic experiment at $k=2$.}
\end{figure}

\begin{figure}[ht]
\centering
\includegraphics[width=0.92\linewidth]{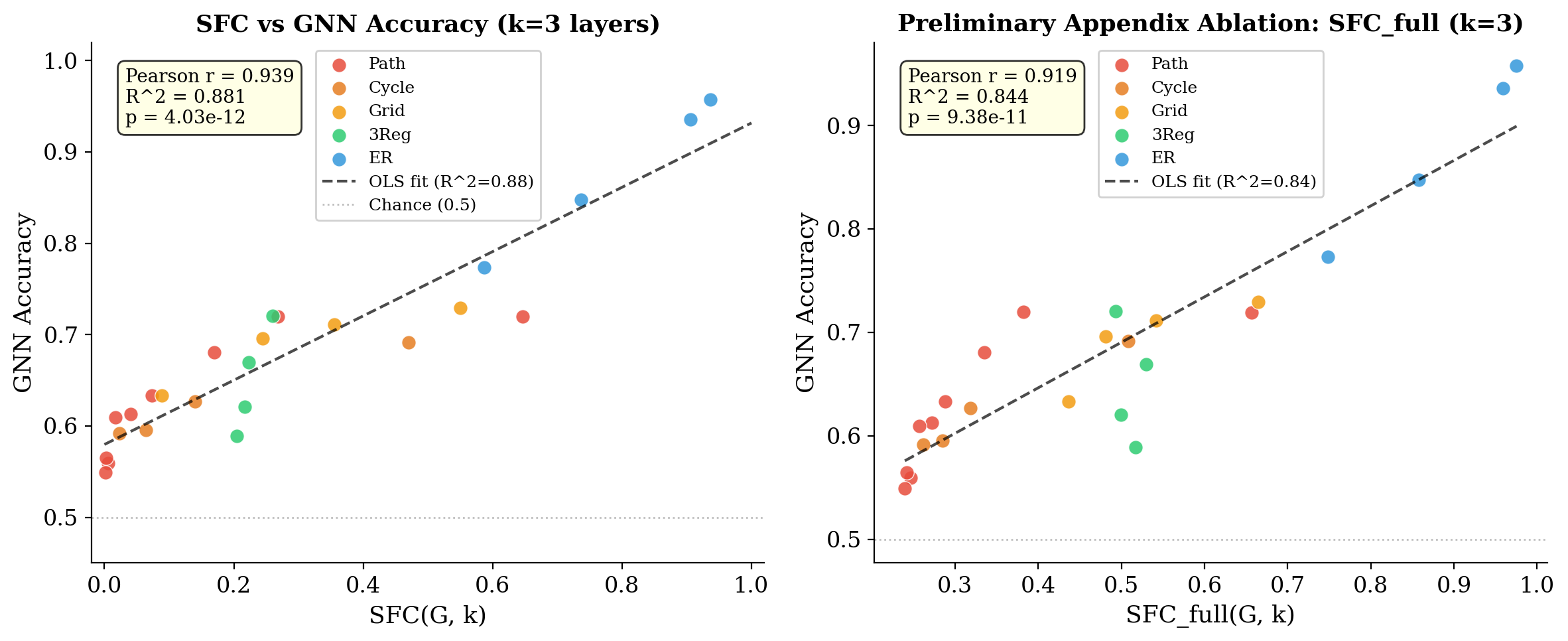}
\caption{Full synthetic experiment at $k=3$.}
\end{figure}

\begin{figure}[ht]
\centering
\includegraphics[width=0.92\linewidth]{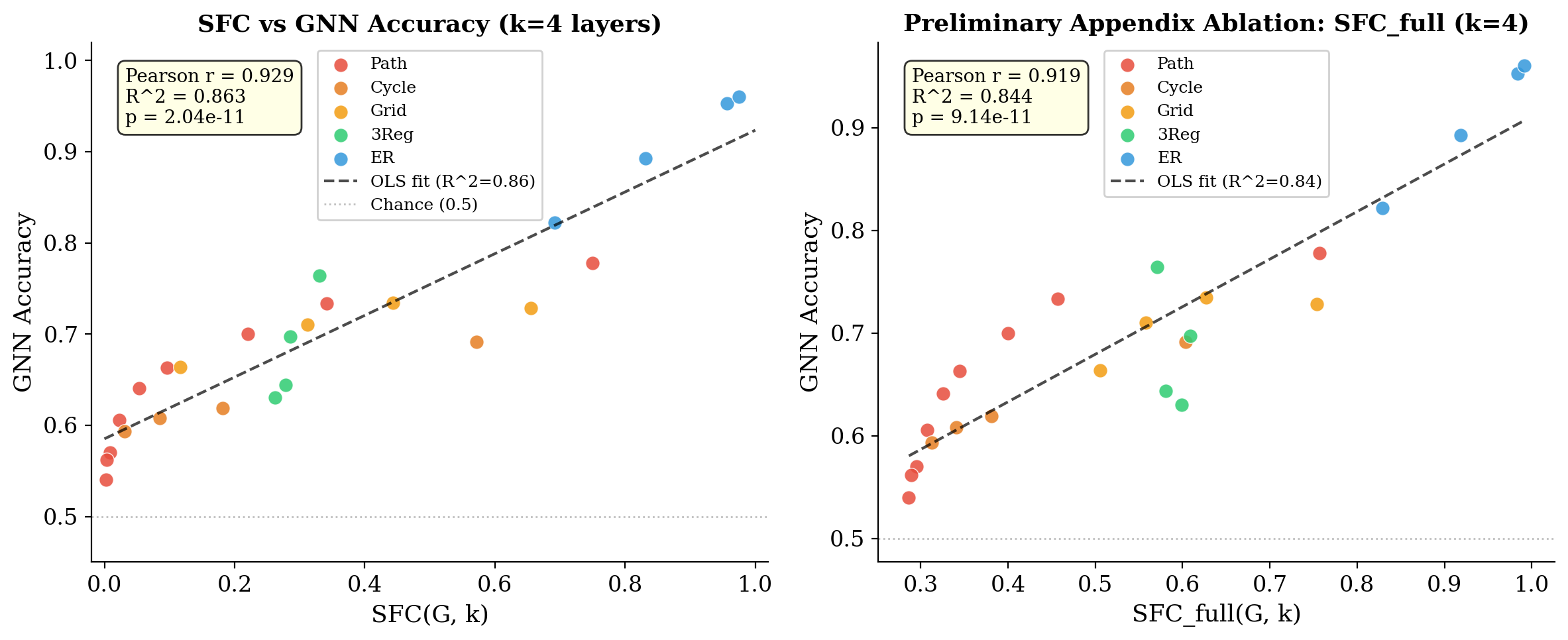}
\caption{Full synthetic experiment at $k=4$.}
\end{figure}

\begin{figure}[ht]
\centering
\includegraphics[width=0.92\linewidth]{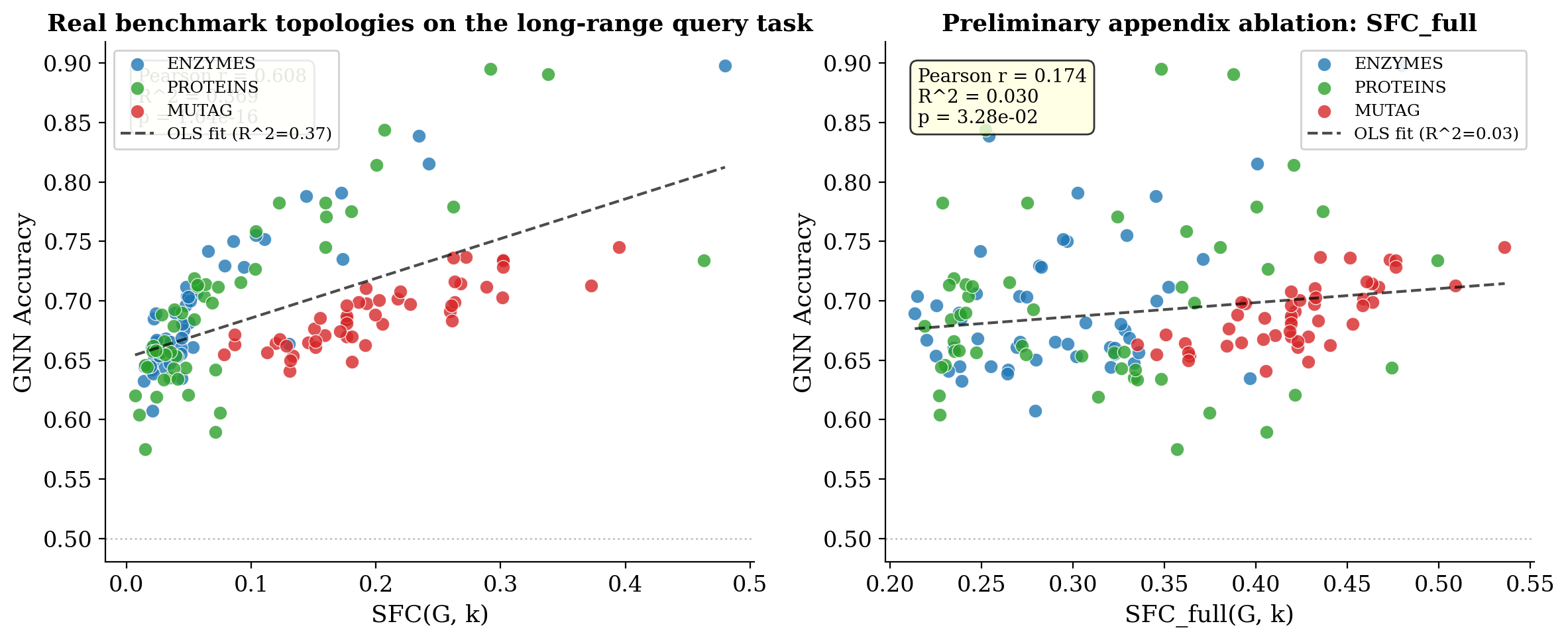}
\caption{Benchmark topologies, combined view.}
\end{figure}

\begin{figure}[ht]
\centering
\includegraphics[width=0.92\linewidth]{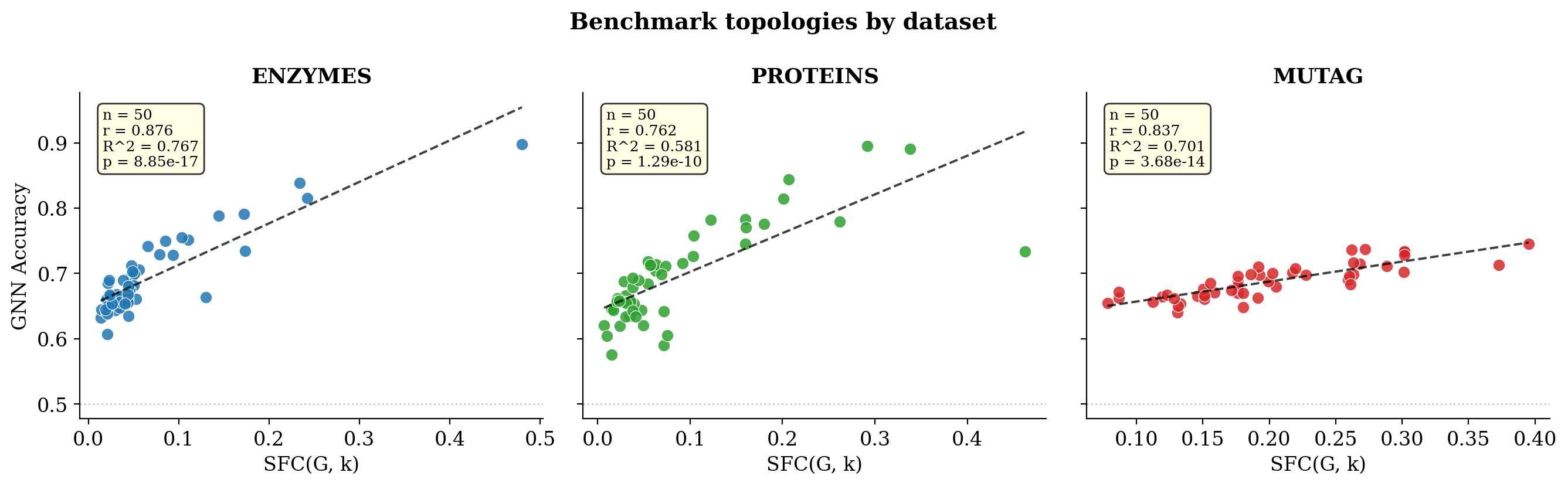}
\caption{Benchmark topologies, per-dataset panels (ENZYMES, PROTEINS, MUTAG).}
\end{figure}

\begin{figure}[ht]
\centering
\includegraphics[width=0.92\linewidth]{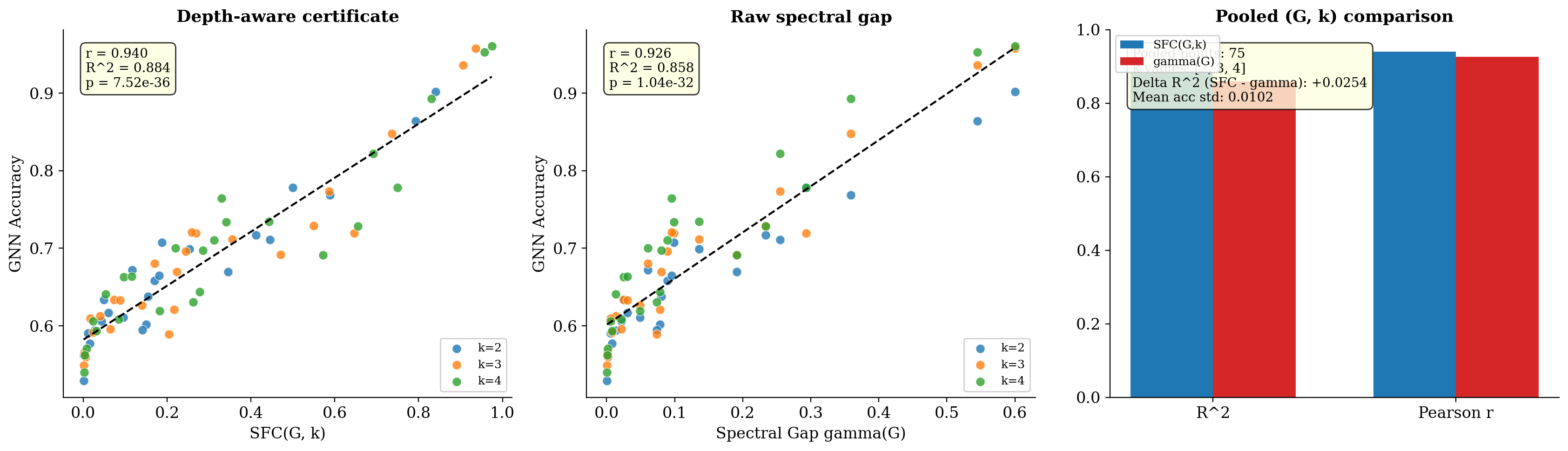}
\caption{Pooled depth comparison between SFC and raw spectral gap.}
\end{figure}

\begin{figure}[ht]
\centering
\includegraphics[width=0.92\linewidth]{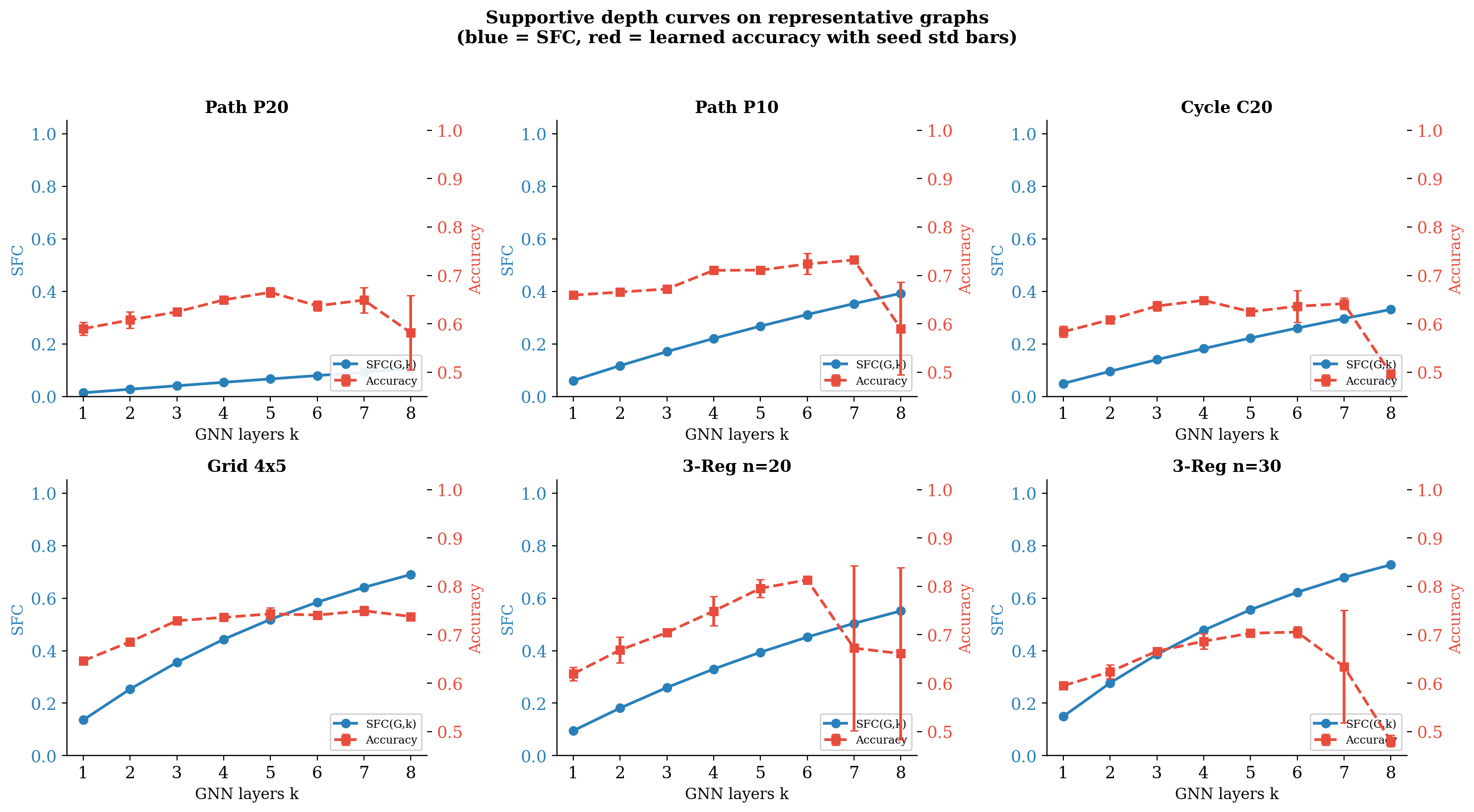}
\caption{Per-graph depth curves: SFC and accuracy versus $k$.}
\end{figure}

\begin{figure}[ht]
\centering
\includegraphics[width=0.92\linewidth]{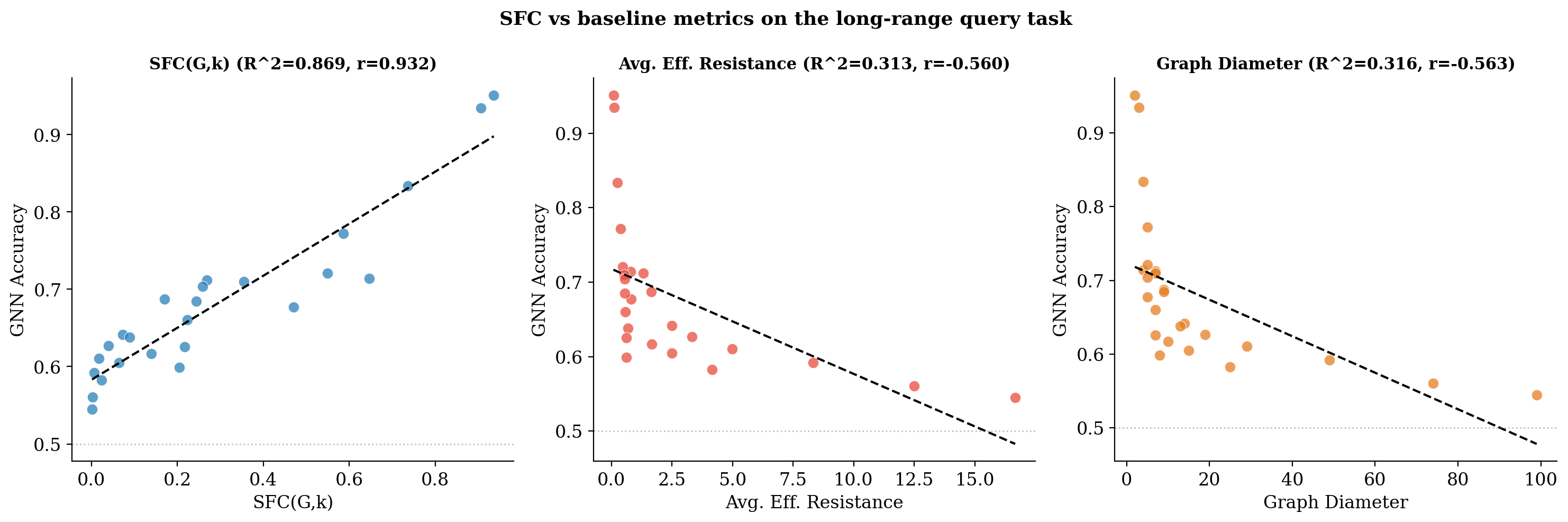}
\caption{Full non-spectral baseline comparison at $k=3$.}
\end{figure}

\begin{figure}[ht]
\centering
\includegraphics[width=0.92\linewidth]{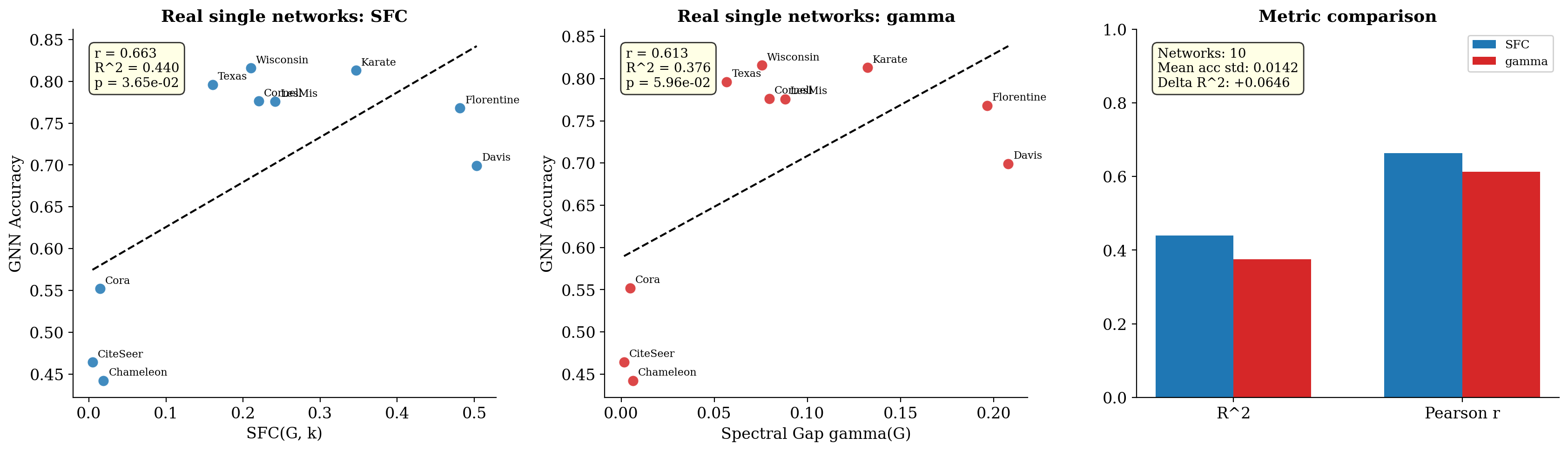}
\caption{Real single-network validation, pooled view.}
\end{figure}

\begin{figure}[ht]
\centering
\includegraphics[width=0.92\linewidth]{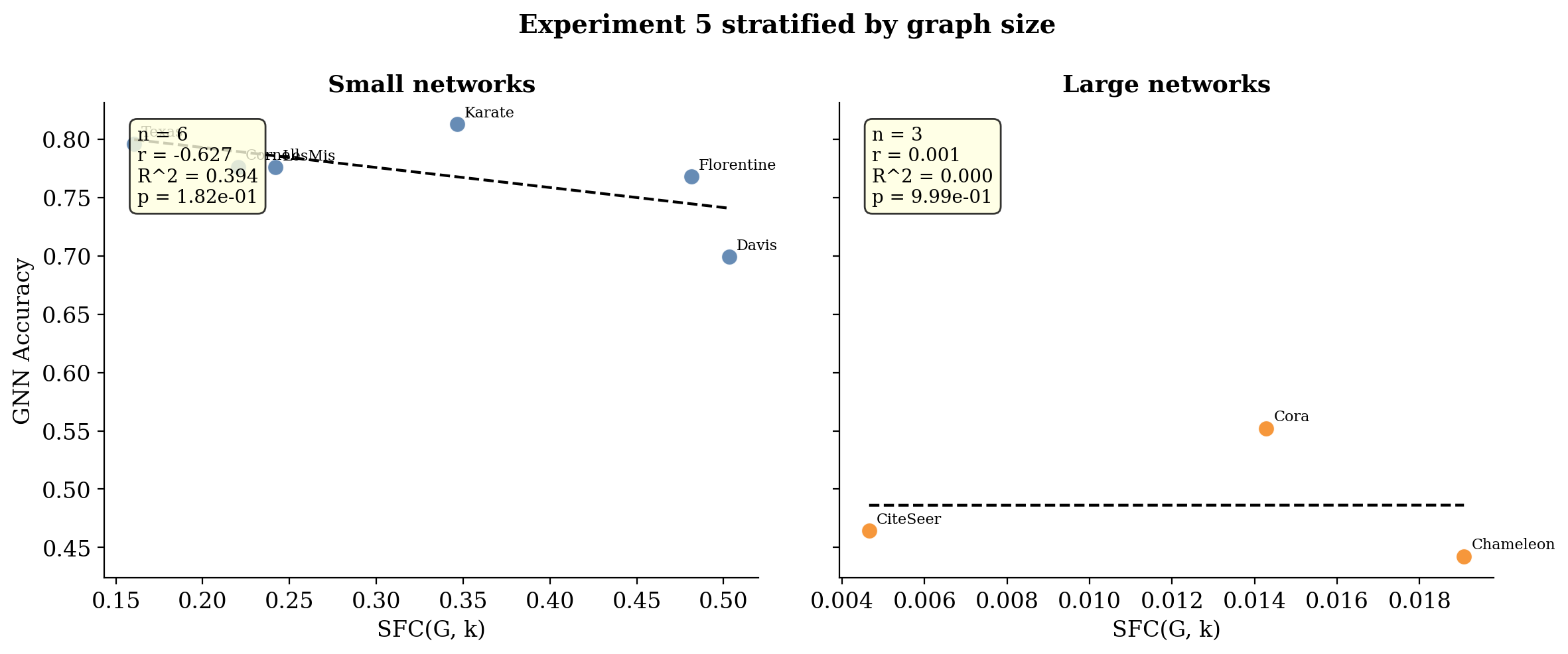}
\caption{Real single-network validation, size-stratified view.}
\end{figure}

\clearpage

\end{document}